\def\eqref#1{equation~\ref{#1}}
\def\1{\bm{1}}
\DeclareMathAlphabet{\mathsfit}{\encodingdefault}{\sfdefault}{m}{sl}
\SetMathAlphabet{\mathsfit}{bold}{\encodingdefault}{\sfdefault}{bx}{n}
\newcommand{\dd}{-1}
\newcommand{\ts}{\mathsf{T}}
\definecolor{mahogany}{rgb}{0.65, 0., 0.5}
\definecolor{Yoni_color}{rgb}{0.3, 0.3, 0.9}
\newtheorem{theorem}{Theorem}
\theoremstyle{definition}
\newtheorem{example}[theorem]{Example}
\newtheorem{definition}{Definition}
\title{Wasserstein Diffusion Tikhonov Regularization}
\author{Alex Tong Lin$^1$, Yonatan Dukler$^1$, Wuchen Li$^1$, and Guido Mont{\'u}far$^{1,2,3}$\\
$^1$Department of Mathematics and $^2$Department of Statistics, UCLA, CA 90095;\\ $^3$Max Planck Institute for Mathematics in the Sciences, 04103 Leipzig, Germany}
\begin{document}

\maketitle

\begin{abstract}
We propose regularization strategies for learning discriminative models that are robust to in-class variations of the input data. We use the Wasserstein-2 geometry to capture semantically meaningful neighborhoods in the space of images, and define a corresponding input-dependent additive noise data augmentation model. Expanding and integrating the augmented loss yields an effective Tikhonov-type Wasserstein diffusion smoothness regularizer. This approach allows us to apply high levels of regularization and train functions that have low variability within classes but remain flexible across classes. We provide efficient methods for computing the regularizer at a negligible cost in comparison to training with adversarial data augmentation. Initial experiments demonstrate improvements in generalization performance under adversarial perturbations and also large in-class variations of the input data. 

	\smallskip 
	\noindent
	\emph{Key words:} 
Wasserstein sample space; 
Gaussian distribution from Wasserstein Diffusion process; 
Tikhonov regularization; 
Adversarial robustness. 
\end{abstract}

\section{Introduction}

The sensitivity of trained discriminative models to small perturbations of the input data has become a reason of concern and an important topic of research in recent years~\citep{%
42503,
DBLP:conf/cvpr/NguyenYC15}. 
In particular, it has been observed that neural networks which have been trained to have good test performance can be fooled when the inputs are slightly perturbed in a way that is imperceptible to humans. 
This indicates a poor generalization ability, and specifically, that the solutions found with naive training and validation techniques are not capturing appropriate smoothness priors over the input space. 
A number of recent works have proposed approaches to improve robustness to perturbations~\citep{%
DBLP:conf/icml/CisseBGDU17,
DBLP:conf/cvpr/LiaoLDPH018,
samangouei2018defensegan,
DBLP:journals/corr/abs-1809-08516,
finlay2019improved}, 
while a complementary line of work probes the limitations of trained networks 
\citep{Salman2019ACR} and develops strategies to generate adversarial attacks~\citep{%
DBLP:journals/corr/abs-1709-10207,
DBLP:conf/cvpr/Moosavi-Dezfooli16,  
DBLP:conf/cvpr/Moosavi-Dezfooli17, 
shafahi2018are}. 

Intuitively, a smoother function at fixed training accuracy should be more robust to perturbations of the input, including adversarial attacks. Therefore, one strategy is to train the discriminative function with smoothness regularizers, such as noise added to the training examples (adversarial or random) or penalizing the norm of the gradient with respect to the inputs. We note, however, that the notion of a `small' perturbation will strongly depend on how we decide to measure distances in the space of inputs. The gradient and its norm depend on the geometric structure that is laid on input space. 

While it is convenient to use the $L^2$ metric (Euclidean), it is well understood that many data types of interest are not Euclidean. In particular, the $L^2$ metric does not measure distances between images in the way that we perceive them. Changes that humans consider small, might correspond to changes that the classifier considers to be large in this metric. Moreover, it is clear that a discriminative function on image data should be more stable in certain directions and more variable in other directions. This distinction is not well captured by isotropic smoothness regularizers. 

To construct more effective smoothness regularizers, two general approaches come to mind: 
1) Measure distances in a metric representation of the raw inputs, $d_\phi(x,y)^2 = \sum_j |\phi(x)_j - \phi(y)_j|^2$, where $\phi$ is some feature representation function that might be trained separately from or together with the discriminative task. Examples in this direction include preprocessing of the inputs by downsampling~\citep{guo2018countering}, autoencoders, and approaches that regularize intermediate representations within the neural network that is being trained for the discriminative task, such as injecting noise in the layers of a ResNet \citep{DBLP:journals/corr/abs-1811-10745}. 
2) Measure distances directly on the inputs (or following light preprocessing), but use a metric that is reflective of our perception of the data. Both approaches allow for data driven specifications and also direct incorporation of prior knowledge about the domain. We focus on the input space approach with the Wasserstein distance, in particular the Wasserstein-2 metric and geometry. 

The Wasserstein distance is known to be an effective metric in the space of images, as demonstrated in image retrieval problems~\citep{Rubner:2000:EMD:365875.365881,solomon2015convolutional,solomon2014earth} and related applications  \citep{peyre2019computational}. 
In particular, the Wassersetin distance is robust to natural variations such as translations and independent noise added to the pixel values. Importantly, the Wasserstein distance exhibits a Riemannian metric structure. This allows us to define Wasserstein gradient penalties and effective Wasserstein Gaussian noise\footnote{Wasserstein Gaussians appear in the small time behavior of a process called Wasserstein diffusion, investigated in continuous \citep{vonrenesse2009} and discrete \citep{Li2018_geometry} states.} in the space of images. The Wasserstein metric depends on the specific location at which it is being evaluated, and can define neighborhoods with a reasonable degree of semantic meaning. See, for example, the Wasserstein geodesics balls illustrated in Figure~\ref{fig:spheres}. 

Our approach in this article is inspired by recent work \citep{82084} which introduces a Wasserstein ground metric for Wasserstein GANs and demonstrates that this facilitates training of  discriminators that are more stable to natural variations of image data. 
We suggest that regularization based on Wasserstein geometry can make a discriminative function noticeably smoother along the directions of natural variations of images, but without making it constant along the directions of semantic variation. 
Moreover, a generative perturbation model can be folded into the training objective (by computing the expectation value over perturbations of the Taylor expanded loss around each training example). This yields an effective penalty term that integrates (up to a given order in the expansion) a continuum of perturbations at once. The Wasserstein geometry on image space is described by a metric tensor whose inverse is a linear weighted Laplacian matrix. This fact allows us to compute the Wasserstein diffusion smoothness regularizer at a negligible cost. 

In Section~\ref{sec:adversarial} we discuss adversarial attacks. In Section~\ref{sec:noise} we discuss training with input noise. We propose a Wasserstein Gaussian distribution in image space, which reflects the natural local variability of images. Then we compute the expectation of the perturbed objective by Taylor expansions in the Wasserstein space. This leads to a Tikhonov-type Wasserstein diffusion smoothness regularizer. In Section~\ref{sec:experiments} we present preliminary experimental results. In Section~\ref{sec:related} we discuss relations of the proposed methods to some of the existing literature, and in Section~\ref{sec:discussion} we offer a discussion. 

\begin{figure}
\centering
\begin{tabular}{cc}
\begin{minipage}{.59\textwidth}
	\centering	
\begin{tabular}{cc}	
\includegraphics[width=.5\textwidth,clip=true,trim=0cm 0cm 6.3cm 0cm]{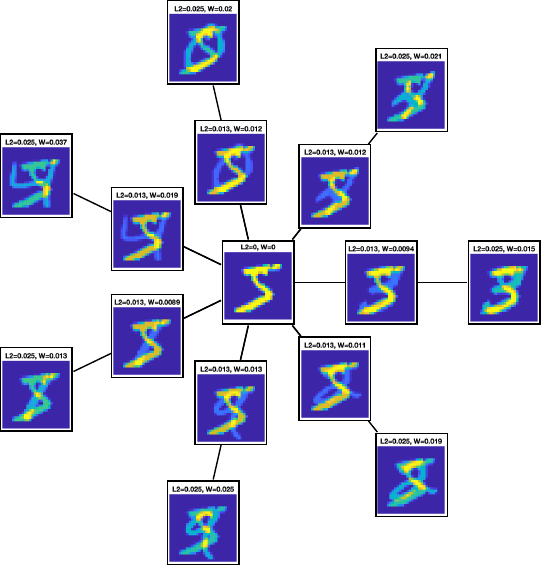} 
&
\!\!
\includegraphics[width=.5\textwidth,clip=true,trim=6.3cm 0cm 0cm 0cm]{spheres2}
\\
{\sffamily\footnotesize Euclidean} 
& 
\!\!
{\sffamily\footnotesize Wasserstein}
\end{tabular}
\end{minipage}
\;\;\;
& 
\begin{minipage}{.4\textwidth}
\sffamily
\begin{tikzpicture}%[x=.9cm,y=.9cm]
\draw[rotate=-5, fill=gray!10, draw =none, opacity =.7] (0,0) ellipse (1cm and 1cm); 
\draw[rotate=-5, fill=gray!30, draw = none, opacity =.7] (0,0) ellipse (.6cm and .6cm); 
\draw[rotate=-5, fill=blue!20, draw = none, opacity=.7] (0,0) ellipse (.45cm and 2.2cm); 
\draw[rotate=-5, thin] (0,0) ellipse (1cm and 1cm); 
\draw[rotate=-5, thin] (0,0) ellipse (.6cm and .6cm); 
\draw[rotate=-5, thin] (0,0) ellipse (.45cm and 2.2cm); 

\draw [very thick, dashed, xshift=.6cm, yshift=-2.4cm] (0,0) to [bend left=8] (.5,4.5) node [midway, right, sloped, yshift=.4cm] {\small \begin{minipage}{2.3cm}Ground\\ truth\\ boundary\end{minipage}};

\node at (0,0)[circle,fill,inner sep=1pt]{}; 
\node [above] at (0,0) {$x$};
\node at (.05,.8)[circle,fill,inner sep=1pt]{}; 
\node [above, xshift=.06cm, yshift=.05cm] at (.05,.85) {$\pi(x)$};

\node [left,fill=white,inner sep=1pt] at (-.55,-.5) {\small Euclidean ball};
\node [left,fill=white,inner sep=1pt] at (-.13,-1.5) {\small Wasserstein ball};
\end{tikzpicture}
\end{minipage}
\end{tabular}
	\caption{
		Shown are fixed-distance interpolates, measured by $L^2$ and Wasserstein metrics, between a source image and some other images in the MNIST data set. When training with smoothness regularizers towards achieving robustness against adversarial perturbations that are perceptually small, we need a good specification of semantically meaningful neighborhoods around which the discriminative function should not change much. As we see, the Wasserstein metric defines a more coherent neighborhood, with a natural interpretation in terms of continuous transportation of colors between pixels, which depends on the input example. 
		The right part illustrates the intuition that an appropriate metric can capture larger in-class neighborhoods around an input image $x$, and hence allow us to apply stronger smoothness regularization against perturbations $\pi(x)$ without hurting test performance. 
	}
	\label{fig:spheres}
\end{figure}

\section{Adversarial training and ground truth geometry}
\label{sec:adversarial}

An adversarial attack is a perturbed version $\pi(x)$ of an input example $x$, which alters the prediction of the classifier such that $f(\pi(x)) \neq f(x)$. 
According to this simplistic definition, every classifier can be successfully attacked, provided it has at least two possible output values. 
Taking a more refined perspective, consider $g(x)$ as the best possible classification (ground truth / Bayes classifier). 
Then a successful adversarial attack can be defined as a perturbation $\pi(x)$ of an input $x$ such that $f(x)=g(x)$ but $f(\pi(x))\neq g(\pi(x))$.  
This highlights that what we care about is not whether a classifier changes its prediction when the input is slightly perturbed, but rather in what scenario it changes its prediction. 

In order to quantify the sensitivity to attacks, we need a measure of the size of the perturbation model and the effect that it has on the classifier. 
Consider a loss function of the form 
\begin{equation}
E(f) = \mathbb{E}_{p(y|x)p(x)}\left[ l(f(x), y) \right]. 
\label{eq:loss}
\end{equation}
We can measure the detriment of the loss when the data is perturbed in comparison with the unperturbed loss. 
For generality, we define a perturbation $\pi$ at $x$ as a random variable. 
For example, in the case of samples from a vector space, additive perturbations take the form $\pi = x+\xi$, where $\xi$ can be, e.g., a zero mean Gaussian variable, or a deterministic value obtained by an attack strategy on the input $x$. 
The level of perturbation can be quantified, for example, in terms of the maximum or average size of the perturbations with respect to some norm, or in terms of the number of computations used to construct the attacks. 
The loss under perturbations is then 
\begin{equation}
\mathbb{E}_{p(\pi|x)p(y|\pi(x))p(x)} \left[ l(f(\pi(x)), y)\right]. 
%\mathbb{E}_{p(\pi|x)p(y|x)p(x)} \left[ l(f(\pi(x)), g(\pi(x))) - l(f(x),y)\right]. 
\end{equation}
Unless we have access to the true labels given the perturbed inputs, i.e., the conditional probability $p(y|\pi(x))$ or the ground truth class $g(\pi(x))$, 
this measure is just theoretical. 
In practice the relation of labels to inputs is unknown, and for training we only have access to a training set $\{(x_i,y_i)\}_{i=1}^N$ and possibly the perturbation model $\pi$. 

A natural approach to obtaining more robust classifiers is to train with perturbations. 
A special case is adversarial training, where the perturbations are constructed specifically to deceive the classifier. 
The problem with this approach is that typically the correct class labels for the perturbed input data are unknown, as mentioned above. 
Therefore, typically one considers simply 
\begin{equation}
E_\pi(f) = \mathbb{E}_{p(\pi|x)p(y|x)p(x)}\left[ l(f(\pi(x)),y) \right].  
\label{eq:lossp}
\end{equation}
For a given pair $(x,y)$ with $y=g(x)$, in general $y \neq g(\pi(x))$. 
Therefore, the perturbation needs to be restricted in a way that ensures $g(\pi(x)) = g(x)$. 
The obvious and most common way to do this without using further information or prior knowledge about the ground truth, is to restrict the perturbations to be very small in some standard norm. 

Adversarial examples are often constructed by minimizing the confidence of the discriminative function or increasing the training loss with respect to the input. 
Since this does not incorporate prior knowledge about the shape of the ground truth, usually the perturbations are restricted to lie within a very small $L^p$ ball around the input example. 
Naive input noise regularization and gradient penalties suffer from a similar problem. When applied at the level that would be needed to prevent adversarial attacks, they tend to smear out the classifier in all directions around the input examples, leading to a significant detriment in test accuracy. 
A similar problem also arises in unsupervised adversarial training, where the base point is arbitrary (not necessarily a training example) and one requires that perturbations within a neighborhood are classified consistently. 
The situation is illustrated in the right part of Figure~\ref{fig:spheres}. 
Another difficulty is that data augmentation can be very expensive both in terms of the time it takes to compute each adversarial example and the number of examples that need to be added to the training data in order to obtain a sufficient level of robustness. 

Instead of restricting the perturbations to be small in an $L^p$ norm sense, we suggest to refine the metric on input space and the perturbation model. 
We propose to measure distances on input space using the Wasserstein metric and train with a corresponding Wasserstein Gaussian input noise. The Wasserstein metric assigns a small distance to natural local variations of an input image. This means that larger perturbations are more likely to remain within the class of the input example that is being perturbed. 
In turn, we can apply higher levels of noise, allow for larger size perturbations in adversarial training, or apply stronger gradient penalties during training. 
This is illustrated in Figure~\ref{fig:spheres}. 
In the next section we derive an effective regularizer for training with the Wasserstein metric on input space and which integrates the entire set of Wasserstein Gaussian noise perturbations (in a second order expansion) for each input example at once.

\section{Perturbed loss and Wasserstein diffusion Tikhonov regularizer} 
\label{sec:noise}

It is well known that training with input noise can be related to training the original objective with an added penalty~\citep{doi:10.1162/neco.1995.7.1.108}. 
These derivations usually are based on Taylor expansion of the perturbed loss around a given example. 
By default, the inputs are considered to live in Euclidean space, with loss functions such as the mean square error or the log loss. 
Following the arguments from the previous sections, we model the input space of images as a Wasserstein space. 
We then derive the Wasserstein Taylor expansion of the perturbed loss and the corresponding regularization penalties. Once the input space is regarded as a Wasserstein metric space, our derivations follow Riemannian calculus therein. 

We consider a perturbation model defined in terms of a ``Wasserstein Gaussian'', which at a given input image $x\in\mathcal{X}=\mathcal{P}(\Omega)$ has a density function of the form 
\begin{equation*}
p(\xi|x) = %\frac{1}{Z}
\exp(- d_W(x,x+\xi)^2 /\eta^2) d(\xi), 
\end{equation*}
with a scale parameter $\eta>0$ and a given reference measure $d(\xi)$. 
Here the Wasserstein-$q$ distance on image space can be defined in the linear programming formulation as 
\begin{equation}\label{ground_metric}
d_W(x,y):= \inf_{\Pi}  \left(\mathbb{E}_{(a,b)\sim \Pi}\left[ d_\Omega(a,b)^q \right]\right)^{\frac{1}{q}}, 
\end{equation}
where $\Pi$ is a joint distribution of pixel pairs $(a,b)$ with marginals $x, y$. Here
$x$, $y$ are images viewed as histograms over the set of pixels $\Omega$. 
The pixel ground metric $d_\Omega\colon \Omega\times \Omega \rightarrow \mathbb{R}_+$ assigns distances to pairs of pixels. The ground metric $d_\Omega$ can be defined in various ways that allow for efficient computations, and it can be trained from examples. 
We focus on the case $q=2$. In this case, we can extract from the Wasserstein distance $d_W$ a Riemannian metric for the space of images. Locally, the Wasserstein-$2$ distance can be expressed as 
\begin{equation}
d_W(x,x+\xi)^2 =(\xi, G_W(x) \xi)_{L^2} + o(\|\xi\|^2), 
\end{equation}
where $\int \xi(a)da=0$ and $G_W(x)=-\big(\nabla_a\cdot(x\nabla_a)\big)^{-1}$ is the Wasserstein Riemannian metric tensor at $x$. Here $\nabla$ and $\nabla\cdot$ are gradient and divergence operators in pixel space, with respect to $a\in\Omega$. In more details, 
\begin{equation*}
(\xi, G_W(x) \xi)_{L^2}=\int \|\nabla_a\Phi(a)\|^2x(a) da,    
\end{equation*}
where $\xi(a)=-\nabla\cdot(x(a)\nabla\Phi(a))$. Here the classical Wasserstein metric is defined on the space of images with equal total mass. This corresponds to the requirement $\int \xi(a) da=0$. In general, we can apply the metric in unnormalized image spaces as well; see related studies on unnormalized optimal transport \citep{GangboLiOsherPuthawala2019_unnormalizeda, LLLO}. 

In practice we will consider a discrete pixel space $\Omega=\{1,\ldots, n\}$. In this case, the Wasserstein Riemannian metric $G_W(x)$ is an $n\times n$ symmetric positive definite matrix that we will discuss further in the next section. 
Practically, the Wasserstein noise model corresponds to Gaussian noise with a covariance matrix $\eta^2 G_W^{-1}(x)$ that depends on the input $x$ and the choice of a ground metric $d_\Omega$ over pixels. 

We are now ready to present our theorem that relates training with Wasserstein diffusion to training with an added penalty term. 

\begin{theorem}[Perturbed loss regularization]
	\label{thm}
Consider an input space $(\mathcal{X},g)$ with the Riemannian metric $g$ represented by a matrix $G(x)$ depending on $x\in\mathcal{X}$, 
and consider the loss $E(f)=\mathbb{E}[l(f(x),y)]$ from~\eqref{eq:loss} with some error function $l$ that is twice differentiable in the first argument. 
Let $\xi$ be a Gaussian noise variable with zero mean and covariance matrix $\eta^2G^{-1}(x)$ depending on $x$. 
Then the perturbed loss from~\eqref{eq:lossp} takes the form 
	\begin{equation*}
	E_\xi(f) = E(f) + \frac{1}{2}\eta^2 E^R(f)  + o(\eta^2), 
	\end{equation*}
	where 
	\begin{equation*}
	E^R(f) = \mathbb{E}_{p(y|x)p(x)}\Big[ l''(f(x),y) \|\nabla_g f(x)\|_g^2 + l'(f(x),y) \Delta_g f(x) 
	\Big]. 
	\end{equation*}
Here $l'$ and $l''$ denote the first and second order ordinary partial derivatives of $l$ with respect to the first argument, and $\nabla_g$, $\|\cdot \|_g$, $\Delta_g$ are the gradient, norm, and Laplace-Beltrami operators on $(\mathcal{X},g)$. 
\end{theorem}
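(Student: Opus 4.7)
The plan is to Taylor-expand the composite function $\xi \mapsto l(f(x+\xi), y)$ at $\xi=0$ and then evaluate the Gaussian expectation term by term. Writing $F(\xi) := l(f(x+\xi), y)$, the chain rule and second-order Taylor formula give
\begin{equation*}
F(\xi) = F(0) + l'(f(x),y)\,\nabla f(x)^\ts \xi + \tfrac{1}{2}\Bigl(l''(f(x),y)\,(\nabla f(x)^\ts \xi)^2 + l'(f(x),y)\,\xi^\ts H_f(x)\xi\Bigr) + o(\|\xi\|^2),
\end{equation*}
where $\nabla f$ and $H_f$ are the Euclidean gradient and Hessian of $f$ in the working coordinates. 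Smoothness of $l$ in its first argument, together with enough regularity on $f$, gives uniform control of the remainder on bounded sets.

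Next I take the expectation with $\xi \sim \mathcal{N}(0, \eta^2 G^{-1}(x))$. The linear term vanishes since $\mathbb{E}[\xi]=0$. For the two quadratic terms, $\mathbb{E}[\xi\xi^\ts] = \eta^2 G^{-1}(x)$ yields
\begin{equation*}
\mathbb{E}\bigl[(\nabla f(x)^\ts \xi)^2\bigr] = \eta^2\,\nabla f(x)^\ts G^{-1}(x)\nabla f(x), \qquad \mathbb{E}\bigl[\xi^\ts H_f(x)\xi\bigr] = \eta^2\,\Tr\!\bigl(G^{-1}(x)H_f(x)\bigr).
\end{equation*}
Averaging over $(x,y)$ and collecting the $\eta^2$ contributions produces a penalty of the claimed shape. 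The residual is $o(\eta^2)$ because Gaussian moments of $\|\xi\|^k$ scale as $\eta^k$, so the cubic-and-higher Taylor terms contribute at most $O(\eta^3)$; Fubini lets me interchange the inner expectation in $\xi$ with the outer expectation over $(x,y)$ under standard integrability.

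The remaining step is the geometric identification of the two quadratic forms with intrinsic Riemannian objects. The first is immediate: since $\nabla_g f = G^{-1}\nabla f$ in coordinates, $\nabla f^\ts G^{-1}\nabla f = \|\nabla_g f\|_g^2$. For the second, I would work in geodesic normal coordinates centered at $x$, in which $G(x) = I$ and the Christoffel symbols vanish at $x$, so that $\Tr(G^{-1}(x)H_f(x)) = \sum_i \partial_i^2 f(x) = \Delta_g f(x)$. Because both sides of the theorem are manifestly coordinate-invariant, pointwise agreement in normal coordinates at each $x$ suffices.

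The main obstacle I anticipate is precisely this last identification. In generic coordinates, $\Tr(G^{-1} H_f)$ and $\Delta_g f$ differ by a first-order term built from derivatives of $G$, and a naive reading of the Euclidean Taylor expansion does not directly produce the Laplace--Beltrami operator. Resolving this cleanly requires either invoking normal coordinates as above, or reinterpreting the perturbation intrinsically (letting $\xi$ parameterize the exponential map at $x$ rather than an ambient additive perturbation), so that the second-order term is genuinely the intrinsic Hessian trace. Once that interpretive choice is pinned down, the remaining arguments are routine moment computations and Taylor remainder estimates.
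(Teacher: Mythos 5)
Your proposal is correct and follows essentially the same route as the paper: second-order Taylor expansion of $\xi\mapsto l(f(x+\xi),y)$, Gaussian moment evaluation with $\mathbb{E}[\xi]=0$ and $\mathbb{E}[\xi\xi^\top]=\eta^2G^{-1}(x)$, and the identification $\nabla f^\top G^{-1}\nabla f=\|\nabla_g f\|_g^2$. The one place where you diverge is instructive: you expand with the coordinate Hessian $H_f$ and then confront the fact that $\Tr\bigl(G^{-1}(x)H_f(x)\bigr)$ differs from $\Delta_g f(x)$ by the first-order term $-g^{jk}\Gamma^l_{jk}\partial_l f$, whereas the paper sidesteps this by writing the second-order term of the expansion directly as $\tfrac12 l'\sum_{i,j}\xi_i\xi_j(\nabla_g^2 f(x))_{ij}$ with the \emph{Riemannian} Hessian, so that its $G^{-1}$-trace is the full Laplace--Beltrami operator $\Delta_g f=\sum_{j,k}g^{jk}\partial^2_{jk}f-g^{jk}\Gamma^l_{jk}\partial_l f$ including the Christoffel correction. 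In other words, the obstacle you anticipate is real, and the paper's implicit resolution is exactly your second option: the perturbation is read intrinsically (expansion ``in the Riemannian space $(\mathcal{X},g)$'', i.e., along the exponential map), not as a literal additive coordinate perturbation. If one insists on the additive reading, the clean statement of the penalty would carry $\Tr(G^{-1}\nabla^2 f)$ in place of $\Delta_g f$; your suggestion to pin down this interpretive choice (normal coordinates or exponential-map parameterization) is precisely what is needed to make the argument airtight, and is arguably more explicit than the paper's own treatment.
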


The proof is provided in Appendix~\ref{secA}. 
In this paper, we focus on $G(x)=G_W(x)$. 
This tells us that training with Wasserstein Gaussian noise corresponds, to second order in the noise level, to training with the unperturbed loss plus a penalty to the squared Wasserstein gradient norm and Laplace operators of the discriminative function. 

We are also interested in non-zero mean perturbations, such as adversarial perturbations used in adversarial training. 
In this case the proof of the theorem yields the expansion 
\begin{equation*}
E_\xi(f) = E(f) + E^R(f) + O(\|\xi\|^2), 
\end{equation*}
where 
\begin{equation*}
E^R(f) = \mathbb{E}_{p(y|x)p(x)} \left[ l'(f(x),y) \cdot \left( \mathbb{E}_{p(\xi|x)}[\xi]^\top \nabla f(x)\right) \right] . 
\end{equation*}	
If the perturbation is deterministic, $\mathbb{E}_{p(\xi|x)}[\xi]$ is simply $\xi(x)$. 
This suggests to regularize by penalizing the directional derivative of the discriminative function in the direction of the perturbation. 
If the perturbation is proportional to the Euclidean steepest descent direction of the discriminative function, $\xi\propto \nabla f(x)$, then the penalty is proportional to $\|\nabla f\|^2$. If the perturbation is proportional to the Riemannian steepest descent direction, $\xi\propto \nabla_g f(x) = G^{-1}(x)\nabla f(x)$, then the penalty is proportional to the Riemannian gradient norm squared, $\|\nabla_g f(x) \|_g^2$. 
%\nabla f(x)^\top G^{-1}(x)^\top \nabla f(x) =  \nabla f(x)^\top G^{-1}(x)^\top G(x) G^{-1}(x)\nabla f(x) $. 

\begin{example}[Square error]
For the square error $l(f(x),y) = (f(x)-y)^2$ and a perturbation model as in Theorem~\ref{thm}, we obtain the regularizer 
\begin{equation*}
	E^R(f) = \mathbb{E}_{p(y|x)p(x)}\Big[ \|\nabla_g f(x)\|_g^2 + (f(x) - y) \Delta_g f(x) 
	\Big]. 
\end{equation*}
For non-zero mean perturbations, we can consider an expansion to first order which gives 
\begin{equation*}
E^R(f) = \mathbb{E}_{p(y|x)p(x)} \left[ 2 (f(x) - y) \cdot \mathbb{E}_{p(\xi|x)}[\xi]^\top \nabla f(x) \right].  
\end{equation*}
\end{example}

\begin{example}[Cross entropy error]
For the cross entropy $l(f(x),y) = -y\ln (f(x)) - (1-y)\ln(1-f(x))$ and a perturbation model as in Theorem~\ref{thm}, we 
obtain the regularizer
\begin{equation*}
	E^R(f) = \mathbb{E}_{p(y|x)p(x)}\Big[ \left(\frac{y}{f^2(x)} + \frac{1-y}{(1-f(x))^2}\right) \|\nabla_g f(x)\|_g^2 +  \left(-\frac{y}{f(x)} + \frac{(1-y)}{(1-f(x))}\right) \Delta_g f(x) 
	\Big]. 
\end{equation*}
In the case of $k$ outputs (e.g., $k$-class classification), the loss function is simply the sum of the loss for each output times $1/k$. 
For non-zero mean perturbations we obtain 
\begin{equation*}
E^R(f) = \mathbb{E}_{p(y|x)p(x)} \left[ \left(-\frac{y}{f(x)} + \frac{1-y}{(1-f(x))}\right) \cdot \mathbb{E}_{p(\xi|x)}[\xi]^\top \nabla f(x) \right].   
\end{equation*}
\end{example}

\begin{example}[Euclidean inputs] 
In the case of Euclidean inputs and uncorrelated zero mean Gaussian noise of variance $\eta^2$, we recover some of the classic calculations by~\citet{doi:10.1162/neco.1995.7.1.108}. 
Consider as an example the square error function, for which the regularizer becomes 
\begin{equation*}
E^R(f) =  \mathbb{E}_{p(y|x)p(x)} \left[ \sum_i \left\{ \left(\frac{\partial f}{\partial x_i}\right)^2 +  (f(x) - y)\frac{\partial^2 f}{\partial x_i^2}  \right\}\right]. 
\end{equation*}
As pointed out by \citet{doi:10.1162/neco.1995.7.1.108}, this is the Tikhonov regularizer that is usually added to the sum of squares error. 
\end{example}

Theorem~\ref{thm} shows that all noise perturbed versions of a given input example can be integrated (in a second order sense) into a single term. 
Formally, equivalence of the regularizer to training with noise is only valid for small values of $\eta$, since it is based on a second order Taylor expansion. 
The Wasserstein diffusion smoothness regularizer $E^R$ also has the natural interpretation as decreasing the variability of the classifier in an anisotropic and input dependent way that is captured by the Wasserstein gradient norm and the Laplace-Beltrami operator. This interpretation remains valid for arbitrarily large values of $\eta$, even if in this case the regularized objective might no longer correspond to the integrated perturbed objective. 

We note that the term involving the Laplace-Beltrami operator is premultiplied with the derivative of the error. For regular choices of $l$, if the classifier makes correct predictions on the training inputs $x$ (which is often the case), the derivative $l'(f(x),y)$ will vanish. This suggests that for the purpose of regularization in settings where the training error vanishes, we can omit the Laplace-Beltrami term and consider only the gradient penalty. 

Taking the perspective of smoothness suggests that we may also regularize by penalizing the gradient of the discriminator, instead of the gradient of the loss function. 
Finally, we point out that the Wasserstein metric can also be used to define the size constraints for adversarial training. 
Usually adversarial perturbations are constrained to have $L^\infty$ norm (or some $L^p$ norm) bounded by a small $\epsilon$. 
Instead of using $\|\xi\|_{L^p}\leq \epsilon$, we can use $\|\xi\|_W \leq \epsilon$, or simply $\xi^\top G_W(x) \xi\leq \epsilon$.

\section{Training with the Wasserstein diffusion Tikhonov regularizer}

In this section we describe the implementation of the regularization scheme introduced in Section~\ref{sec:noise}. Theorem~\ref{thm} suggests to replace the original objective function by a regularized objective $E + \eta^2 E^R$, where $\eta$ corresponds to the strength of the regularization. For each training example $x$ we add 
\begin{equation}
l''(f(x),y)\cdot \|\nabla_W f(x)\|_W^2 + l'(f(x),y)\cdot \Delta_W f(x). \label{eq:regularization}
\end{equation}
The first term of \eqref{eq:regularization} involves the Wasserstein gradient of the discriminative function $f$ with respect to the input. This type of calculation appeared recently in the context of Wasserstein Generative Adversarial Networks with Wasserstein ground metric~\citep{82084}. 
We regard each input image as a histogram over pixels. 
Then we define a weighted graph $\mathcal{G}=(V, E, \omega)$, where the vertices $V=\{1,\ldots, n\}$ correspond to pixels, edges $E$ connect adjacent pixels, and $\omega$ is a symmetric matrix of weights $\omega_{ij}$ associated to the edges. The normalized volume form on node $i\in I$ is given by $d_i=\frac{\sum_{j\in N(i)}\omega_{ij}}{\sum_{i=1}^n\sum_{i'\in N(i)}\omega_{ii'}}$. For computational efficiency, we consider sparse graphs with a local connectivity structure that is invariant with respect to vertical and horizontal shifts in the pixel domain. In the experiments, we use local grids of radius $\operatorname{rad}=2,4,6,8$ and constant weights. 

The Laplacian matrix associated with the weighted graph $\mathcal{G}$ is defined, depending on the input $x$, as 
\begin{equation*}
L(x)_{ij}=\begin{cases}
\frac{1}{2}\sum_{k\in N(i)}\omega_{ik}(\frac{x_i}{d_i}+\frac{x_k}{d_k} ),& \textrm{if $i=j$}\\
-\frac{1}{2}\omega_{ij}(\frac{x_i}{d_i}+\frac{x_j}{d_j}),&\textrm{if $j\in N(i)$}\\
  0, & \textrm{otherwise}.
\end{cases}
\end{equation*}
The Wasserstein metric tensor is the matrix function given by the (pseudo) inverse of the weighted Laplacian operator, 
\begin{equation*}
G_W(x)=L(x)^{-1}\in\mathbb{R}^{n\times n}. 
\end{equation*}
Written explicitly, the Wasserstein gradient norm squared is 
\begin{align}
\|\nabla_W f(x)\|_W^2 
&= \nabla f(x)^\top G_W(x)^{-1}\nabla f(x) \nonumber\\
%&=\nabla f(x)^{\top} G_W(x)^{-1}\nabla f(x) \nonumber\\
&=\nabla f(x)^\top L(x) \nabla f(x) \nonumber\\
&= \sum_{(i,j)\in E} \omega_{ij}\left(\frac{\partial}{\partial x_i} f(x) - \frac{\partial}{\partial x_j} f(x)\right)^2 \frac{x_i/d_i+x_j/d_j}{2}. \label{eq:grad_norm}
\end{align}
An efficient implementation of \eqref{eq:grad_norm} is described in Appendix~\ref{sec:computation_gradient}. 

The second term of \eqref{eq:regularization}, the Wasserstein Laplace-Beltrami of the discriminative function,  
is computed as 
\begin{equation*}
\Delta_W f(x) 
=\textrm{tr}(L(x)\nabla^2f(x))+\nabla f(x)^{\ts}L(x)\nabla\log\textrm{det}(L(x))^{-\frac{1}{2}}).
\end{equation*}
Here $\textrm{det}(L(x))^{-\frac{1}{2}}$ is the volume form for the Wasserstein manifold. Here $\textrm{det}(L(x))$ is the product of non-singular eigenvalues. 
For well-posedness of the volume form, we consider a compact set in the interior of a finite dimensional probability simplex. In this case, $L(x)$ is a positive definite matrix, and $\textrm{det}(L(x))^{-\frac{1}{2}}$ is well defined. In practice, we could also consider the reference density as the Lebesgue measure, which omits the volume term. Details on the implementation are provided in Appendix~\ref{app:Riemannian_calculus}. 
We leave a more systematic study and computation of the Wasserstein Laplace-Beltrami term for future work.

\section{Experiments}
\label{sec:experiments}

In this section we present initial experimental results, leaving a more extensive experimental evaluation for future work. 
We evaluate the utility of Wassserstein smoothness regularization in terms of the robustness of the trained classifiers to small and large perturbations. 
We focus on regularization by the Wasserstein gradient norm penalty.

\subsection{Stability to adversarial perturbations of the input data}

In this experiment we test the effectiveness of the gradient penalty regularizer in terms of the test accuracy of the trained classifiers. 
We train a ResNet-20 on clean images from CIFAR-10 with gradient norm penalty computed under Euclidean and Wasserstein metrics. 
We run grid search for the regularization strength and the radius defining the ground metric on pixel space. 
The training error converges to zero in all cases. 
We consider two types of test data: the clean test data set (natural generalization) and the test data set with each test example perturbed by an adversarial attack (robust generalization). 
Following current literature, adversarial perturbations are computed by FGSM and I-FGSM~\citep{DBLP:journals/corr/GoodfellowSS14,DBLP:conf/iclr/KurakinGB17}. 
More details on the implementation and hyperparameters are provided in Appendix~\ref{sec:details_exp1}. 
Our results, reported in Table~\ref{tab:my_label}, compare well with the state of the art for CIFAR-10, where current works report robust test 
%accuracy of $62.48 \%$ under FGSM $\epsilon = 8/255$, and $57.94\%$ under IFGSM-20 $\alpha=2/255$, $\epsilon = 8/255$ 
error for FGSM $\epsilon = 8/255$ of $37.52 \%$, and for I-FGSM-20 $\alpha=2/255$, $\epsilon = 8/255$ of $42.06\%$ \citep{DBLP:journals/corr/abs-1811-10745}. 

\begin{table}[H]
    \centering
    \begin{tabular}{|l | c c c|}
    \hline
        Test data \quad\textbackslash{}\quad Regularizer & None & Euclidean grad. & Wassserstein grad. \\ \hline 
        Natural & $16.29$ & $15.61$ & $\mathbf{15.35}$ \\ \hline
%         FGSM $\epsilon = 8/255$ (ReLU) &  $77.39$  & $28.30$ & $\mathbf{28.05}$ \\ \hline 
        FGSM $\epsilon = 8/255$ &  $82.22$  & $31.10$ & $\mathbf{30.20}$ \\ \hline
        FGSM $\epsilon = 25/255$ &  $89.72$ & $66.83$ & $\mathbf{44.32}$ \\ \hline
        I-FGSM-20 $\alpha = 2/255$, $\epsilon=8/255$ &  
        $90.15$ & $40.06$ & $\mathbf{32.12}$ \\ \hline 
    \end{tabular}
    \smallskip
    \caption{Robust test error percentage (lower is better) for a ResNet-20 network with softplus  activation trained for 200 epochs on clean CIFAR-10 training images using gradient norm regularization with Euclidean and Wasserstein metric on image space. 
    We run grid search over the regularization strength and the ground metric radius on pixel space. 
    % report the test error of the configuration with best validation error. 
    }
    \label{tab:my_label}
\end{table}

\subsection{Stability to large in-class variations of the input data}

Most work on adversarial robustness focuses on small perturbations, with adversarial attacks restricted to have a small norm so that they remain imperceptible to humans. 
We are interested in generalization for all kinds of in-class variations of the data, including large perturbations that should not change the predicted class. 
In this experiment we train on the clean CIFAR-10 training set (no data augmentation), and compare between no regularization, Euclidean, and Wasserstein smoothness regularization. 
For testing, we randomly draw $1000$ images from the natural test set and construct for each of them a sequence of translated versions with padding, as depicted in Figure~\ref{fig:horse}. 
The semantic meaning of images should remain constant under relatively large translations, 
and therefore we expect a robust classifier to label all images in the sequence similarly.  Quantitatively, this is measured by the number of label flips in the sequence. 
We report the average number of label flips over all sequences of test images in  Table~\ref{tab:translations}. 
As the table shows, Wasserstein gradient regularization improves the robustness of the classifier to translations.

\begin{figure}[H]
    \centering
    \includegraphics[width = 10cm, clip=true, trim=.4cm .8cm 0cm .4cm]{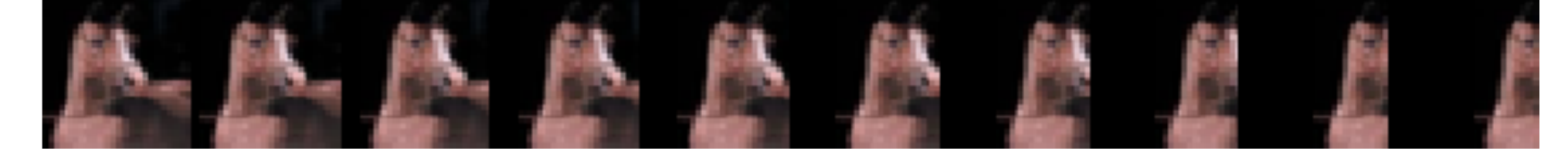}
    \caption{Robust classifiers should be invariant to natural variations of the data. Shown are horizontal translations of an image from CIFAR-10.}
    \label{fig:horse}
\end{figure}

\begin{table}[H]
\title{Label flips on natural in-class perturbations}
\centering
%\small 
%\vspace{-.5cm}
    \begin{tabular}{| l | c c c |}
                 \hline
         Perturbation \textbackslash{} Regularizer & None  &  Euclidean grad. & Wasserstein grad. \\
         \hline 
         Horizontal translation  & $10.009$ & $7.898$ &  $\mathbf{6.488}$   \\
         \hline 
         Vertical translation &  $\phantom{1}9.920$ & $9.437$ & $\mathbf{7.956}$  \\
         \hline
    \end{tabular} 
    \smallskip 
    
    % Vertical translation STDS are 
    % no_reg 4.28481038086868
    % euclidean 3.964597205265625
    % wasserstein 3.5835825649760045
    
    % Horinzontal translation STDS are 
    % no_reg 4.207483689807959
    % euclidean 3.797314314090947
    % wasserstein 3.6262730178518

    \caption{Average number of prediction flips on sequences of translated test images from CIFAR-10. 
    The classifiers were trained on the clean CIFAR-10 training set with no data augmentation, with either no regularization, Euclidean Tikhonov regularization, or Wasserstein Tikhonov regularization. 
    }
    \label{tab:translations}
\end{table}

\section{Related works}
\label{sec:related}

There are many works related to Wasserstein geometry, robustness, regularization. In this section we briefly mention some of the literature in relation to our discussion in this article. 

{\em Adversarial robustness.} 
Previous works have investigated postprocessing with Jacobian regularization~\citep{DBLP:journals/corr/abs-1803-08680} and cross Lipschitz regularization~\citep{NIPS2017_6821}, whereby the input space was modeled as Euclidean space. 
The duality of attack norms and Lipschitz norms has been discussed as well~\citep{finlay2019improved}. 
%
%Training with adversarial examples was proposed in~\cite{DBLP:journals/corr/GoodfellowSS14}. 
Perturbation based regularization has been proposed~\citep{NIPS2018_7324}, which penalizes the negative size of the deep fool attack in proportion to the size of the input. 
Gaussian data augmentation was proposed too~\citep{Zantedeschi:2017:EDA:3128572.3140449}, but evaluated by Monte Carlo samples and using Euclidean space. 
Recently, the tradeoff between natural 
%$\mathbb{E}[1\{ f(x) y\leq 0 \}]$ 
and robust classification errors was studied, 
%$\mathbb{E}[1\{ \exists x' \in B(x,\epsilon)\colon f(x')y\leq 0 \}]$, for some $\epsilon\geq0$,
leading to a training objective with an added term of the form $\mathbb{E}_x[\max_{x'\in B(x,\epsilon)}\phi(f(x)f(x')/\lambda)]$ \citep{DBLP:journals/corr/abs-1901-08573}. % TRADES 
Similar to ours, this approach penalizes the variability of the classifier, but it is not incorporating priors about the geometry of the classes. 
While working on this article, we became aware of a work using modified Sinkhorn iterations to approximate projections of adversarial examples onto a Wasserstein ball~\citep{pmlr-v97-wong19a}. This is similar to the adversarial norm constraint that we suggested here. However, our approach is based on a Riemannian metric formulation, which allows us to obtain a very simple quadratic form approximation of the norm and also to integrate a generative  noise model (Wasserstein diffusion) into an effective smoothness regularizer. 

{\em Wasserstein sample space.} 
Optimal transport has been applied to the design of training objectives~\citep{WGAN}. Recently this has been combined with a Wasserstein metric on image data space~\citep{82084}. The Kantorovich duality of the Wasserstein training objective leads to a Lipschitz constraint on the discriminator networks, which itself is computed in Wasserstein space. This approach is known as the Wasserstein of Wasserstein GAN (WWGAN). 
Our regularization penalty derived from expanding a Wasserstein Gaussian noise variable in the input space of a classifier also includes a term that penalizes the Wasserstein gradient norm. However, our analysis has a different motivation and interpretation and also reveals higher order expansion terms. 

{\em Wasserstein diffusion.}
The noise in Wasserstein space has been studied in continuous~\citep{vonrenesse2009} and discrete~\citep{Li2018_geometry} state spaces. 
The present work seems to be the first to apply Wasserstein type noise in machine learning. 
In this trend, the definition and efficient computation of the Riemannian volume form on Wasserstein space remains an open problem for future work. 

{\em Wasserstein Information Geometry.} The Wasserstein metric is gaining traction not only in the design of training objectives and in the definition of geometric structures on the sample space of generative models, but also in the development of natural gradients and optimizers \citep{Li2018_geometry, Li2018,LiMontufar2018_ricci}. Recently this has been applied to GANs \citep{LinLiOsherMontufar2018_wasserstein}. In this paper, we derive and apply second order calculus of Wasserstein geometry for improving generalization, especially improving the robustness to adversarial attacks. 

{\em Robustness and regularization.} 
Wasserstein balls have appeared in the context of robust density estimation  \citep{shafieezadeh2017regularization}, where they are also related to a form of Tikhonov regularization. 
Wasserstein distributionally robust stochastic optimization has been related to regularization by certain empirical gradient norms \citep{gao2017wasserstein}. 
%While these works use Wasserstein geometry for density estimation, we use Wasserstein geometry on the input space of a classifier. 
%Gradient norm regularization with respect to the input of a discriminative model has been investigated by \cite{varga2018gradient}. 
Close to our derivations, albeit not involving Wasserstein geometry, is the work by \cite{doi:10.1162/neco.1995.7.1.108}, which shows that training with noise is equivalent to {T}ikhonov regularization.

\section{Discussion}
\label{sec:discussion}
Training with input noise or data augmentation in general is known as an effective form of regularization to obtain classifiers that are more robust to natural variations of the data, or to reduce the sensitivity to perturbations. 
These methods usually have a high cost in terms of the number of examples needed and the cost of computing each of them (especially in the case of adversarial data augmentation obtained by iterated gradient methods). 
Another problem is that usually noise models and adversarial examples need to be restricted to tiny norm values to ensure that they remain within the class of the perturbed example. 
Smoothness regularizers based on $L^p$ metrics are usually limited in the same way. 
In this paper we follow the idea that the space of inputs is not Euclidean and that smoothness priors should be implemented with respect to an appropriate metric, which in turn would allow us to apply higher levels of regularization without hurting test performance. 
We propose to use the Wasserstein-2 metric to capture semantically meaningful neighborhoods of images. 
As we show, the Wasserstein diffusion smoothness regularizer arises naturally by expanding and integrating the loss with respect to Wasserstein Gaussian noise on the inputs. 
We obtain an effective penalty that can be computed very efficiently, saving computation compared with adversarial data augmentation, and has a negligible overhead over $L^2$ gradient penalties. 
Preliminary experimental results indicate that our methods can significantly improve robust generalization performance on CIFAR-10. 
%For robust generalization performance on CIFAR-10 under I-FGSM adversarial perturbations we have observed that our method improves current results by as much as $10\%$. 
We obtain models that are robust not only to small perturbations (the usual setting in adversarial robustness literature), but also to large scale in-class perturbations, such as translations. 
We think that this is conceptually an important step towards learning models that generalize better in relation to all types of natural variations of the input data, not only small perturbations.

\subsubsection*{Acknowledgments}
This research has received funding from AFOSR MURI FA9550-18-1-0502. 
YD has received funding from the National Science Foundation Graduate Research Fellowship under Grant No.~DGE-1650604. 
Part of this research was performed at the Institute for Pure and Applied Mathematics (IPAM), which is supported by the National Science Foundation (Grant No.~DMS-1440415). 
This project has received funding from the European Research Council (ERC) under the European Union's Horizon 2020 research and innovation programme (grant agreement n\textsuperscript{o} 757983). 

\bibliography{wgm}
\bibliographystyle{abbrvnat}

\appendix

\section{Proof of Theorem~\ref{thm}}\label{secA}
A Riemannian metric $g$ defines an inner product between tangent vectors of the input space at each possible location. 
We choose standard coordinates for the input space $\mathcal{X}=\mathbb{R}^n$ and write $g(\xi, \zeta) = (\xi,\zeta)_g = \xi^\top G(x) \zeta$ for any pair $\xi,\zeta\in T_x\mathcal{X}$. We implicitly identify $T_x\mathcal{X}$ and $\mathcal{X}$ so that adding a tangent vector $\xi\in T_x\mathcal{X}$ to an input vector $x\in\mathcal{X}$ makes sense. 
The Riemannian gradient with the metric $g$ is given by $\nabla_g f(x) = G^{-1}(x)\nabla f(x)$, where $\nabla$ is the ordinary gradient. 
This is also known as the natural gradient. 

\begin{proof}[Proof of Theorem~\ref{thm}]
We expand the error function $l$ around a data point $x$ with added noise $\xi$ in the Riemannian space $(\mathcal{X},g)$. 
We obtain 
\begin{multline*}
l(f(x+\xi), y) = 
l(f(x),y) + l'(f(x),y) (\nabla_g f(x),\xi)_g \\
+ \frac12 l''(f(x),y) (\nabla_g f(x), \xi)_g^2 + \frac12 l'(f(x),y) \sum_{i,j} \xi_i\xi_j (\nabla_g^2 f(x))_{ij} + o(\|\xi\|_g^2). 
\end{multline*}
We discuss the individual terms in turn. 
The zero order term is just the unperturbed loss. 
On taking the expectation value with respect to $\xi$ given $x$, the linear term vanishes when we assume that the perturbations have zero mean, $\mathbb{E}_{p(\xi|x)}[\xi]=0$. 
If the perturbation does not have zero mean, we obtain 
	\begin{equation*}
	\mathbb{E}_{p(\xi|x)}[(\nabla_g f(x),\xi)_g] = \mathbb{E}_{p(\xi|x)}[(G^{-1}(x)\nabla f(x))^\top G(x) \xi] = \nabla f(x)^\top \mathbb{E}_{p(\xi|x)}[\xi]. 
	\end{equation*}
	
	For the first quadratic term, when $\mathbb{E}_{p(\xi|x)}[\xi\xi^\top]= \eta^2 G^{-1}(x)$, we obtain 
	\begin{gather*}
	\mathbb{E}_{p(\xi|x)}[(\nabla_g f(x)^\top G(x) \xi )^2]  = 
	\eta^2 \nabla_g f(x) ^\top G(x) \nabla_g f(x)
	=
	\eta^2 \| \nabla_g f\|_g^2. 
	\end{gather*}
	For the second quadratic term, again when $\mathbb{E}_{p(\xi|x)}[\xi \xi^\top] = \eta^2 G^{-1}(x)$, we obtain 
	\begin{equation*}
	\mathbb{E}_{p(\xi|x)} [\xi^\top \operatorname{Hess} f(x) \xi]  = \eta^2 \Delta_g f(x).
	\end{equation*}
	Here the Laplace-Beltrami operator is $$\Delta_g f =\sum_{j,k} g^{jk} \frac{\partial^2 f}{\partial x^j \partial x^k} - g^{jk} \Gamma^l_{jk}\frac{\partial f}{\partial x^l},$$  where $\Gamma^l_{jk}$ is the Christoffel symbol. 
\end{proof}

\section{Riemannian calculus in Wasserstein space over discrete states}
\label{app:Riemannian_calculus}

In this section, we review the definition of Wasserstein-2 Riemannian metric on discrete states founded in \citep{chow2012,EM1,M} and developed in  \citep{ChowLiZhou2018_entropy} and \citep{Li2018}. See Wasserstein Riemannian calculus in \citep{Li2018_geometry}. 
We also discuss practical implementations of the Laplace-Beltrami operator appearing in the Wasserstein Tikhonov regularizer. 

Consider $I=\{1,\ldots, n\}$. The probability simplex on $I$ is the set 
\begin{equation*}
\mathcal{P}_+(I) = \Big\{(x_1,\ldots, x_n)\in \mathbb{R}^n \colon \sum_{i=i}^n x_i=1,\quad  x_i>0 0\Big\}. 
\end{equation*}
Here $x=(x_1,\ldots, x_n)$ is a probability vector with coordinates $x_i$ %$p_i$ 
corresponding to the probabilities assigned to each node $i\in I$. 

We next define the Wasserstein-2 metric tensor on $\mathcal{P}_+(I)$. 
This is given in terms of a undirected graph with weighted edges, $\mathcal{G}=(I, E, \omega)$, where $I$ is the vertex set, $E\subseteq {I\choose 2}$ 
is the edge set, and $\omega=(\omega_{ij})_{i,j\in I} \in \mathbb{R}^{n\times n}$ is a matrix of edge weights satisfying 
$$
\omega_{ij}=
\begin{cases}
\omega_{ji}>0, & \text{if $(i,j)\in E$}\\
0, & \text{otherwise}
\end{cases}.
$$ 
The set of neighbors (adjacent vertices) of $i$ is denoted by $N(i)=\{j\in V\colon (i,j)\in E\}$. 
The normalized volume form on node $i\in I$ is given by $d_i=\frac{\sum_{j\in N(i)}\omega_{ij}}{\sum_{i=1}^n\sum_{i'\in N(i)}\omega_{ii'}}$. 

The graph structure $\mathcal{G}=(I, E, \omega)$ induces a graph Laplacian matrix function. 
\begin{definition}[Weighted Laplacian matrix]
	\label{def2}
	Given an undirected weighted graph $\mathcal{G}=(I,E,\omega)$, with $I=\{1,\ldots, n\}$, the matrix function $L(\cdot):\mathbb{R}^n\rightarrow \mathbb{R}^{n\times n}$ is defined as 
	\begin{equation*}
	L(p)=D^{\ts}\Lambda(x)D,\quad x=(x_i)_{i=1}^n\in \mathbb{R}^n,
	\end{equation*}
	where 
	\begin{itemize}
		\item 
		$D \in \mathbb{R}^{|E|\times n}$ is the discrete gradient operator given by 
		\begin{equation*} 
		D_{(i,j)\in {E}, k\in V}=\begin{cases}
		\sqrt{\omega_{ij}}, & \text{if $i=k$, $i>j$}\\ 
		-\sqrt{\omega_{ij}}, & \text{if $j=k$, $i>j$}\\
		0, & \text{otherwise}, 
		\end{cases}
		\end{equation*}
		\item $-D^{\ts}\in \mathbb{R}^{n\times |E|}$ is the oriented incidence  matrix,
		and 
		\item $\Lambda(x)\in \mathbb{R}^{|E|\times |E|}$ is a weight matrix depending on $x$, %$p$, 
		\begin{equation*}
		\begin{split}
\Lambda(x)_{(i,j)\in E, (k,l)\in E} 
		=\begin{cases}
		\frac{1}{2}(\frac{1}{d_i}x_i+\frac{1}{d_j}x_j), & \text{if $(i,j)=(k,l)\in E$}\\ 
		0, & \text{otherwise}.
		\end{cases}
	    \end{split}
		\end{equation*}
	\end{itemize}
\end{definition}

We are now ready to present the Wasserstein-2 metric tensor. Consider the tangent space of $\mathcal{P}_+(I)$ at~$x$, %$p$, 
\begin{equation*}
T_x\mathcal{P}_+(I) = \Big\{(\sigma_i)_{i=1}^n\in \mathbb{R}^n\colon  \sum_{i=1}^n\sigma_i=0 \Big\}.
\end{equation*}

\begin{definition}[Wasserstein-2 metric tensor]\label{d9}
	The inner product 
	$g \colon T_x\mathcal{P}_+(I)\times T_x\mathcal{P}_+(I)  \rightarrow \mathbb{R}$ takes any two tangent vectors $\sigma_1, \sigma_2\in T_p\mathcal{P}_+(I)$ to 
	\begin{equation*}
	g_x(\sigma_1,\sigma_2):={\sigma_1}^{\ts}L(x)^{-1}\sigma_2,\quad \text{for any $\sigma_1,\sigma_2\in T_x\mathcal{P}_+(I)$}, 
	\end{equation*}
	where $L(x)^{\dd}$ is the pseudo inverse of $L(x)$. 
\end{definition}
Here $L(x)^{\dd}_{ij}$ plays the role of $g_{ij}$ in the last subsection \ref{secA}. Using this metric tensor, the Riemannian calculus in $(\mathcal{P}_+(I), g)$ has the following formulations.

 (i) The Christoffel symbol $\Gamma^W_x\colon T_x\mathcal{P}_+(I)\times T_x\mathcal{P}_+(I)\rightarrow T_x\mathcal{P}_+(I)$ forms: 
\begin{equation*}
\Gamma^W_x(\sigma_1,\sigma_2)=-\frac{1}{2}\big[L(\sigma_1)L(\rho)^{-1}\sigma_2+L(\sigma_2)L(\rho)^{-1}\sigma_1\big]+\frac{1}{2}L(\rho)\Big(\nabla_G L(\rho)^{-1}\sigma_1	\circ\nabla_G L(\rho)^{-1}\sigma_2\Big)\ ,
\end{equation*}
where $\sigma_1$, $\sigma_2\in T_x\mathcal{P}_+(I)$ and 
$$\big(\nabla_G L(\rho)^{-1}\sigma_1	\circ \nabla_G L(\rho)^{-1}\sigma_2\big)=\Big(\frac{1}{2d_i}\sum_{j\in N(i)}(\nabla_{ij}L(\rho)^{-1}\sigma_1) (\nabla_{ij}L(\rho)^{-1}\sigma_2) \Big)_{i=1}^n\in \mathbb{R}^n.$$ 

(ii) Given $F\in C^{\infty}(\mathcal{P}_+(I))$, the Riemannian gradient is 
 \begin{equation*}
\textrm{grad}_W F(x)=\Big(L(x)^{\dd}\Big)^{\dd}\nabla F(x)=L(x)\nabla F(x),
\end{equation*}
where $\nabla$ is the Euclidean $L^2$ derivative w.r.t. $x$. 

(iii) The Riemannian Hessian operator $\textrm{Hess}_W F(\rho)\colon T_{\rho}\mathcal{P}_+(M)\times T_{\rho}\mathcal{P}_+(M)\rightarrow \mathbb{R}$ is given by 
\begin{equation*}
\begin{split}
\textrm{Hess}_W F(x)(\sigma_1, \sigma_2)=&\sigma_1^{\ts}\nabla^2F(x)\sigma_2+\sum_{i=1}^n\Gamma_x^W(\sigma_1,\sigma_2)_i \nabla_{x_i}F(x),
\end{split} 
\end{equation*}
where $\sigma_1$, $\sigma_2\in T_x\mathcal{P}_+(I)$.

(iv) The Riemannian volume is given by 
\begin{equation*}
\textrm{vol}_{W}(x)=\textrm{det}(L(x))^{-\frac{1}{2}}=\Pi_{i=1}^{n-1}\lambda_i(x)^{-\frac{1}{2}}\ ,
\end{equation*}
where $\lambda_i(x)$ are the positive eigenvalues of $L(x)$. 

(v) The Laplacian--Beltrami operator is given by   
\begin{equation*}
\begin{split}
\Delta_W F(x)=\textrm{tr}(L(x)\Big(\frac{\partial^2f}{\partial x_i\partial x_j}\Big)_{1\leq i,j\leq n})+\nabla f(x)^{\ts}L(x)\nabla \log\textrm{det}(L(x))^{-\frac{1}{2}}.
\end{split}
\end{equation*}
If the reference measure is a Lebesgue measure in a simplex, the modified Wasserstein Laplacian operator satisfies
\begin{equation}%\label{8}
\begin{split}
\tilde{\Delta}_Wf(x)=&\textrm{tr}(L(x)(\frac{\partial^2f}{\partial x_i\partial x_j})_{1\leq i,j\leq n})   \\
=& \sum_{(i,j)\in E} \omega_{ij} \left(\frac{\partial^2}{\partial x_i^2} f(x) -2\frac{\partial^2}{\partial x_i\partial x_j}f(x)+\frac{\partial^2}{\partial x_j^2} f(x)\right)\frac{x_i/d_i+x_j/d_j}{2}.
\label{eq:approx_laplace}
\end{split}
\end{equation}

\section{Efficient implementation of the Wasserstein gradient norm}
\label{sec:computation_gradient}

To compute \eqref{eq:grad_norm} in practice, we define a suitable similarity graph $\mathcal{G}=(V,E,\omega)$ for the space of images, displaying translation invariance and symmetries. 
First, there is the invariance with respect to pixel translations.  
Symmetries arise since the distance from a pixel to two spatially opposite pixels is equal. 
In addition, each term in the sum in \eqref{eq:grad_norm} can be decomposed into the product of linear relations between values in nodes $i$ and $j$. Each linear relation (e.g $\nabla_{X_{i}} f- \nabla_{X_j} f $) is computed via a convolution, in this case on $\nabla_{X} F$.  Convolutions may replace a linear product owing it to the described symmetries and invariances. For each relative neighbor direction we define a convolutional filter with the number of filters equal to the number of neighbors. 
For the truncated similarity graph for the Wasserstein distance, the edge set $E$ is sparse and the number of convolutional filters is reduced considerably from $n^2$. 
We therefore can calculate all pairs $\nabla_{X_i} f - \nabla_{X_j} f$ with a given relative neighbor relation by passing $X$ and $\nabla_{X} f$ through of kernels $K_{\mathcal{O}_1} \dots K_{\mathcal{O}_d}$. In this case a neighbor relation, is defined as the geometrical pattern between two pixels. A pixel located in position $(10,10)$ satisfies a neighbor relation $(1,2)$ with a pixel in location $(11,12)$. The neighbor relation is indeed invariant to the global position of the pixel which allows for the use of convolutions. 
Given a ground metric, we enumerate all non-zero neighbor relations as $\mathcal{O}_1, \dots \mathcal{O}_d$, for truncated distances this is much smaller than the complete edge graph. 
For each neighbor relation $\mathcal{O}_k$ we associate a zero-valued kernel that equals $1$ and $-1$ in the corresponding $\mathcal{O}_k$ pixels, we denote the gradient kernels as $K_{\mathcal{O}_k}$. Likewise, we apply the same $\mathcal{O}_k$ pattern, now with $\frac{1}{2}$, $\frac{1}{2}$ in the corresponding neighbor pattern pixels to obtain the terms $\frac{X_i/d_i+X_j/d_j}{2}$. For each $i,j$ we denote the input kernels as $M_{\mathcal{O}_k}$. Applying entry-wise multiplication ($\odot$) and a summation collapsing all pixel locations and channels then yields an efficient and general method of calculating the Wasserstein gradient $\|\operatorname{grad} f\|_{W_{2, d(\Omega)}}$ for general local cost metrics on highly optimized convolution. 

\begin{algorithm}
\caption{Wasserstein gradient norm $\|\operatorname{grad}f(X) \|_{W}^2$.}
\label{alg:buildL}
\begin{algorithmic}[1]
\REQUIRE{The pixel graph $\mathcal{G} = (V,E,\omega)$; local weights $(w_{ij})$; neighbor relation tuples arranged symmetrically $\mathcal{O}_{1} \dots \mathcal{O}_{d}$}
\REQUIRE{Euclidean gradient $\nabla_{X} f$}
\STATE{$\textit{Wasserstein-grad} \gets 0$}
\FOR{neighbor relations $k =1,\ldots,d$}
    \STATE{Build kernel $K_{\mathcal{O}_k}$ to compute $\nabla _{X_i}f - \nabla _{X_{\mathcal{O}_k (i)}}f$}
    \STATE{Build corresponding kernel $M_{\mathcal{O}_k}$  to compute $\frac{X_i}{2d_i} + \frac{X_{\mathcal{O}_k}}{2d_{\mathcal{O}_k}}$}
    \STATE{$H \gets K_{\mathcal{O}_k}(\nabla_{X}f)$ } 
    \STATE{$V \gets M_{\mathcal{O}_k}(X)$ } 
    \STATE{$H \gets H \odot H $\quad (entry-wise multiplication)}
    \STATE{$W \gets H  \odot V$}
    \STATE{$\textit{Wasserstein-grad} \gets \textit{Wasserstein-grad} + \text{sum}(W) $}
\ENDFOR
\STATE{\textbf{Return }$\|\operatorname{grad}f(X) \|_{W}^2= \textit{Wasserstein-grad}$}
\end{algorithmic}
\end{algorithm}

\section{Details on the experimental setup}
\label{sec:details_exp1}

For our experiments, we use the CIFAR-10 dataset, and perform white-box attacks on the ResNet20 network. For training, we fixed the batch size of $128$, and used SGD with momentum and weight decay, where the momentum value is $0.9$ and the weight-decay value is $10^{-4}$. We start with a learning rate of $0.1$, and at epoch $100$ and $150$ we divide the learning rate by $10$ each time.

We examine the case of training the ResNet-20 network on the CIFAR-10 dataset, where the only data augmentation performed is normalization. 
This achieves a test accuracy of 
%$84.5\%$. 
$83.71\%$. 
We then examine the effect of modifying the loss objective with either the Euclidean or Wasserstein gradient penalties of the original loss, namely we use the loss
    \begin{equation*}
        \ell(f(x), y) + \eta^2(\nabla_x \ell(f(x), y), G(x)^{-1}\nabla_x \ell(f(x), y) ) , 
    \end{equation*}
where $\nabla_x$ is the Euclidean gradient and $G(x)\in\mathbb{R}^{d\times d}$ represents the metric used in sample space. For the Wasserstein gradient norm, $G(x)^{-1}=L(x)$. For the Euclidean gradient norm, $G(x)=I$.

\end{document}